\title{\LARGE \bf
Feedback-based Digital Higher-order Terminal Sliding Mode \\ for 6-DOF Industrial Manipulators
}
\author{Zhian Kuang$^{1,2}$, Xiang Zhang$^{2}$, Liting Sun$^{2}$, Huijun Gao$^{1}$ and Masayoshi Tomizuka$^{2}$
\thanks{$^{1}$ Zhian Kuang and Huijun Gao are with the Research Institute of Intelligent Control and Systems, Harbin Institute of Technology, 150001, Harbin, P.R. China. {\tt\small zhiankuang@berkeley.edu, hjgao@hit.edu.cn}}%
\thanks{$^{2}$ Xiang Zhang, Liting Sun, and Masayoshi Tomizuka are with the Department of Mechanical Engineering, University of California, Berkeley, CA 94720, USA.
{\tt\small xiang\_zhang\_98@berkeley.edu, litingsun@berkeley.edu, tomizuka@berkeley.edu}}%

\thanks{$^{2}$
Zhian Kuang is also with the Department of Mechanical Engineering, University of California, Berkeley, CA 94720, USA.}
}
\newtheorem{myrem}{Remark}
\newtheorem{mythm}{Theorem}
\begin{document}
\maketitle
\thispagestyle{empty}
\pagestyle{empty}

\begin{abstract}

The precise motion control of a multi-degree of freedom~(DOF) robot manipulator is always challenging due to its nonlinear dynamics, disturbances, and uncertainties. Because most manipulators are controlled by digital signals, a novel higher-order sliding mode controller in the discrete-time form with time delay estimation is proposed in this paper. The dynamic model of the manipulator used in the design allows
proper handling of nonlinearities, uncertainties and disturbances involved in the problem. Specifically, parametric uncertainties and disturbances are handled by the time delay estimation and the nonlinearity of the manipulator is addressed by the feedback structure of the controller. The combination of terminal sliding mode surface and higher-order control scheme in the controller guarantees a fast response with a small chattering amplitude. Moreover, the controller is designed with a modified sliding mode surface and variable-gain structure, so that the performance of the controller is further enhanced. We also analyse the condition to guarantee the stability of the closed-loop system in this paper. Finally, the simulation and experimental results prove that the proposed control scheme has a precise performance in a robot manipulator system.

\end{abstract}

\section{INTRODUCTION}

Manipulators have been extensively used in the industry to manufacture products. However, with nonlinearities, high couplings, and significant parametric uncertainties existing in the dynamics, the manipulator's precise control is always challenging~\cite{wang2020model}. Moreover, with more and more industrial scenarios involving manipulators, different kinds of disturbances and high accelerations are applied, making it harder to design a precise and robust controller for a manipulator. 

For the past several years, researchers have proposed several kinds of manipulator controllers, such as feedforward and PID control~\cite{abdelhedi2014nonlinear}, sliding mode control~\cite{yang1999sliding}, adaptive robust control~\cite{alakshendra2016robust}. Among these methods, sliding mode control is proven to be straightforward to be applied in complex electromechanical system~\cite{park2000sliding}. More importantly, because of the sliding mode surface, this control method shows outstanding robustness to disturbances and uncertainties~\cite{kuang2020precise}. Researchers have proposed several kinds of modified SMC to make up the original drawbacks like the slow convergence on the sliding surface~\cite{ma2020fuzzy, kuang2020fractional}. For instance, Yong proposed a non-singular terminal sliding mode control (TSMC) method, which makes the states converge in finite time on the sliding surface~\cite{feng2002non}. 

Despite that TSMC has precise and robust performance in theory, it cannot be applied directly to the manipulators for two reasons. One reason is that the digital processor is always used to control the manipulator nowadays, directly applying the continuous-time controller to the discrete-time environment results in large chattering amplitude or even instability~\cite{kuang2018simplified}. To address this problem, researchers have put forward the discrete-time terminal sliding mode control~(DTSMC)~\cite{li2013discrete} and apply it to motion control systems~\cite{abidi2008discrete, kuang2018contouring}. Compared with the traditional linear sliding mode surface, the terminal sliding surface has a faster response benefiting from the finite-time convergence~\cite{tripathi2019fast, kim2019discrete}. Nevertheless, this kind of dynamics also enlarges the chattering amplitude, which makes the precision of the manipulator decrease. A proper way to solve this problem is to raise the order of the sliding mode~\cite{rath2016output,ahmed2019adaptive}. For example, Xu proposed a second-order terminal sliding mode strategy~\cite{xu2015piezoelectric}. However, the implementation is on a linear positioning model, which cannot be applied instantly to the manipulator. Moreover, the order of the sliding mode controller is fixed, which cannot be changed based on the performance of the practical application. Raising the order is beneficial to reduce the chattering, but may also reduce the controller's robustness. Thus we need to build a new and flexible structure that can balance the chattering amplitude and the robustness at the same time.

The other obstacles to improving manipulators' performance are the nonlinearity, the significant parametric uncertainties, and disturbances. They need to be handled appropriately; otherwise, the tracking precision will deteriorate, especially when the manipulator is in operation with large acceleration~\cite{iwamura2013motion}. A popular way is to introduce the feedforward term to compensate for the nonlinear dynamics and use the controller to suppress the uncertainties and disturbances~\cite{abdelhedi2014nonlinear}. As the feedforward structure is open-loop, this method is not sufficient enough when there are parametric uncertainties on the manipulator's dynamic model. 
Another method to handle the nonlinearity is the time-delay estimation (TDE)~\cite{lee2017adaptive,della2019unified}. By using TDE, the nonlinear dynamics, the present lumped disturbances, and uncertainties are estimated by the previous states~\cite{wang2020model}. 
When the sampling frequency of the system is high, the delay time is appropriately chosen, and the disturbances change relatively slowly, this method has excellent performance in~\cite{van2016finite} and \cite{kali2018super}. However, as TDE highly rely on the state at a particular time point, extra disturbances are easy to be introduced. Moreover, when the manipulator operates with fast speed and massive acceleration, the estimation error will increase due to the fast change of dynamics. Thus we need a new structure to make the best use of TDE to achieve better precision.

This paper proposes a novel feedback-based digital higher-order terminal sliding mode control (DHTSMC) scheme. This controller is designed based on the manipulator's nominal dynamics, including the linear and nonlinear parts. The manipulator's nonlinear dynamics are directly handled by the equivalent control part of the controller in a feedback way. As an assistant, TDE is used to compensate for the uncertainties and disturbances in the system. For the controller, a new terminal sliding mode surface is designed to improve the states' dynamics on the sliding mode surface. We also give out a universal and flexible form of a higher-order switching controller. A specific order can be selected to balance the chattering phenomenon and the robustness of the controller. To address the performance reduction when there are large accelerations, we mainly use the controller's variable gain. More importantly, we implement the proposed controller to a 6-DOF manipulator in the environments of both simulations and experiments, whose results show that our proposed method has an advantage over previous ones. 

The remainder of this paper is as follows. Section~\ref{sec:controller design} introduces the model of manipulators and the proposed controller. In Section~\ref{sec:stability analysis}, the stability of the closed-loop control system is analyzed. Section~\ref{sec:implementation and simulation} introduces the practical application of the proposed controller to manipulators and the results of simulations and experiments.

\section{CONTROLLER DESIGN} \label{sec:controller design}

The continuous-time Lagrange model of a manipulator with n-DOFs is written as
\begin{align} \label{model:natural model}
    \bm{M}(\bm{q})\bm{\ddot{q}}+\bm{C}(\bm{q},\bm{\dot{q}}) \bm{\dot{q}}+\bm{G}(\bm{\dot{q}})+\bm{F_f}(\bm{q},\bm{\dot{q}})+\bm{d}=\bm{\tau}
\end{align}
where $\bm{M}(\bm{q}) \in \mathbb{R}^{n\times n}$ is the inertia matrix, $\bm{C}(\bm{q},\bm{\dot{q}})\in \mathbb{R}^{n\times n}$ denotes the Coriolis matrix, $\bm{G(\dot{q})} \in \mathbb{R}^{n}$ and $\bm{F_f}(\bm{q},\bm{\dot{q}})\in \mathbb{R}^{n}$ are the gravity vector and the friction vector respectively, $\bm{\ddot{q}}$, $\bm{\dot{q}}$ and $\bm{q}$ are respectively the angular acceleration, velocity and position of each joint, $\bm{d}\in \mathbb{R}^{n}$ stands for the disturbances, and $\bm{\tau} \in \mathbb{R}^{n}$ is the torque vector.

Because the parametric uncertainties extensively exist in practical systems, we use $\bm{\bar{\bullet}}$ to denote the matrices or vectors calculated from the nominal values, then~(\ref{model:natural model}) is rewritten as
\begin{align}
    \bm{\tau}=&\bm{\bar{M}}(\bm{q})\bm{\ddot{q}}+\bm{\bar{C}}(\bm{q},\bm{\dot{q}}) \bm{\dot{q}}+\bm{\bar{G}}(\bm{\dot{q}})+\bm{\bar{F_f}}(\bm{q},\bm{\dot{q}})\nonumber \\
    &+\bm{H}(\bm{q},\bm{\dot{q}},\bm{\ddot{q}})
\end{align}
with
\begin{align}
    \bm{H}(\bm{q},\bm{\dot{q}},\bm{\ddot{q}})
    =&\bm{\tilde{M}}(\bm{q})\bm{\ddot{q}}+\bm{\tilde{C}}(\bm{q},\bm{\dot{q}}) \bm{\dot{q}}+\bm{\tilde{G}}(\bm{\dot{q}})
    \nonumber \\
    &+\bm{\tilde{F_f}}(\bm{q},\bm{\dot{q}})+\bm{d}
\end{align}
where $\tilde{\bullet}=\bullet-\bar{\bullet}$ stands for the uncertain part of each matrix or vector.

If the manipulator is sampled with a sampling interval $T$, then the discrete-time model at time $t$ is described as
\begin{align} \label{model:discrete-time}
    \bm{\tau_k}=&\bm{\bar{M}}(\bm{q_k})\bm{\ddot{q_k}}+\bm{\bar{C}}(\bm{q_k},\bm{\dot{q}_k}) \bm{\dot{q}_k}+\bm{\bar{G}}(\bm{\dot{q}_k})+\bm{\bar{F_f}}(\bm{q}_k,\bm{\dot{q}}_k)
    \nonumber \\ &+\bm{H}(\bm{q_k},\bm{\dot{q}_k},\bm{\ddot{q}_k})
\end{align}
with
\begin{align}
    \bm{H}(\bm{q_k},\bm{\dot{q}_k},\bm{\ddot{q}_k})
    =&\bm{\tilde{M}}(\bm{q_k})\bm{\ddot{q}_k}+\bm{\tilde{C}}(\bm{q_k},\bm{\dot{q}_k}) \bm{\dot{q}_k}+\bm{\tilde{G}}(\bm{\dot{q}_k})
    \nonumber \\
    &+\bm{\tilde{F_f}}(\bm{q_k},\bm{\dot{q}_k})+\bm{d_k}
\end{align}
where $t=kT$, $\bm{\dot{q}_k}=\frac{\bm{q_{k+1}}-\bm{q_{k}}}{T}$, and $\bm{\ddot{q}_k}=\frac{\bm{\dot{q}_{k+1}}-\bm{\dot{q}_{k}}}{T}$.


We use TDE to estimate $\bm{H}(\bm{q_k},\bm{\dot{q}_k},\bm{\ddot{q}_k})$, i.e., we calculate out the uncertain part in the previous sampling point and use this value to approximate the uncertain part in the present. It is formulated as
\begin{align}
    \bm{\hat{H}}(\bm{q_k},\bm{\dot{q}_k},\bm{\ddot{q}_k})=&\bm{H}(\bm{q_{k-1}},\bm{\dot{q}_{k-1}},\bm{\ddot{q}_{k-1}}) \nonumber \\
    =&\bm{\tau_{k-1}}-\bm{\bar{M}}(\bm{q_{k-1}})\bm{\ddot{q}_{k-1}} \nonumber\\
    &-\bm{\bar{C}}(\bm{q_{k-1}},\bm{\dot{q}_{k-1}}) \bm{\dot{q}_{k-1}}
    \nonumber \\
    &-\bm{\bar{G}}(\bm{\dot{q}_{k-1}})-\bm{\bar{F_f}}(\bm{q}_{k-1},\bm{\dot{q}}_{k-1}).
\end{align}

The estimation error between $\bm{H}(\bm{q_k},\bm{\dot{q}_k},\bm{\ddot{q}_k})$ and $\bm{\hat{H}}(\bm{q_k},\bm{\dot{q}_k},\bm{\ddot{q}_k})$ is presented as
$\bm{\check{H}}(\bm{q_k},\bm{\dot{q}_k},\bm{\ddot{q}_k})=\bm{H}(\bm{q_k},\bm{\dot{q}_k},\bm{\ddot{q}_k})-\bm{H}(\bm{q_{k-1}},\bm{\dot{q}_{k-1}},\bm{\ddot{q}_{k-1}})$.

Define the tracking error of the joints as
\begin{align}
    \bm{\check{q}_k}=\bm{q_k}-\bm{r_k}
\end{align}
where $\bm{r_k}$ is the reference vector of the joints.

To obtain precise, fast and robust performance, we design a novel discrete-time terminal sliding surface as 
\begin{align}
    \bm{s_k}=\bm{a_1} \bm{\check{q}_k} +\bm{a_2} \textrm{sig}^{\bm{\beta_k}} \bm{\check{q}_k}+\bm{\dot{\check{q}}_k}
\end{align}
with $
    \textrm{sig}^{\bm{\beta_k}} \bm{\check{q}_k}=[\textrm{sig}^{\beta_{k,1}} \check{q}_{k,1}~\textrm{sig}^{\beta_{k,2}} \check{q}_{k,2}~\cdots~\textrm{sig}^{\beta_{k,n}} \check{q}_{k,n}]^{\bm{T}}
$, 
$
    \beta_{k,i}=\frac{|{\check{q}}_{k,i}|+0.5}{|{\check{q}}_{k,i}|+1},~i=1,2,\cdots,n
$,
where $\textrm{sig}^{\bullet}*=|*|^{\bullet}\textrm{sgn}(*)$, $\bm{a_1}=\textrm{diag}[a_{1,i}],~i=1,2,\cdots,n$, $\bm{a_2}=\textrm{diag}[a_{2,i}],~i=1,2,\cdots,n$, $a_{1,i}$ and $a_{2,i}$ are positive constants.

We design the reaching law of r-order variable-gain sliding mode controller as
\begin{align} \label{eq:reaching law}
    s_{k+1,i}=&-b_{0,i}(\ddot{q}_{k,i})T s_{k,i}^\eta-b_{1,i}(\ddot{q}_{k,i})Ts_{k-1,i}^\eta-\dots \nonumber \\
    &-b_{r,i}(\ddot{q}_{k,i})Ts_{k-r,i}^\eta
\end{align}
where $0<\eta<1$ is a positive constant, $b_{j,i}(\ddot{q}_{k,i}),~j=1,2,\cdots,r$ is a function of $\ddot{q}_{k,i}$. 


Based on the reaching law~(\ref{eq:reaching law}),the discrete-time r-order variable-gain terminal sliding mode controller with TDE is obtained as
\begin{align} \label{controller: overall}
    \bm{\tau_{k}}=&\frac{1}{T}\bm{\bar{M}}(\bm{\dot{r}_{k+1}}-a_1(\bm{\dot{q}_k}T+\bm{q_k}-\bm{r_{k+1}})-a_2\textrm{sig}^{\bm{\beta_k}}(\bm{\dot{q}_k}T
    \nonumber \\
    &+\bm{q_k}-\bm{r_{k+1}} ) -\bm{\dot{q}_{k}})+\bm{\bar{C}}(\bm{q_k},\bm{\dot{q}_k}) \bm{\dot{q}_k}+\bm{\bar{G}}(\bm{\dot{q}_k})
    \nonumber \\
    &+\bm{\bar{F_f}}(\bm{q}_k,\bm{\dot{q}}_k)+\bm{\hat{H}}(\bm{q_k},\bm{\dot{q}_k},\bm{\ddot{q}_k})
    \nonumber \\
    &+\bm{\bar{M}} ((1-\bm{b_0}(\bm{\ddot{q}_k})T)\bm{s_k}-\bm{b_1}(\bm{\ddot{q}_k})T\bm{s_{k-1}} \nonumber \\
    &-\bm{b_2}(\bm{\ddot{q}_k})T\bm{s_{k-2}} 
    -\dots -\bm{b_r}(\bm{\ddot{q}_k})T\bm{s_{k-r}}).
\end{align}


\section{Stability Analysis} \label{sec:stability analysis}

The following theorem states the sufficient condition to guarantee the control system stable:

\begin{mythm} \label{thm:theorem 1}
For the system~(\ref{model:discrete-time}) under the controller~(\ref{controller: overall}), if there are a series positive constants ${\alpha_1,~\alpha_2, ~\cdots, ~\alpha_r}$ satisfying 
$
     1> \alpha_1 >\alpha_2 >\cdots> \alpha_r >0
$
and 
\begin{align} \label{thm:conditions of b}
    b_{n,i}(\ddot{q}_{k,i}) \leq \frac{1}{T} \sqrt{\frac{\alpha_n-\alpha_{n+1}}{r+2}}
\end{align}
then $s_{k-m,i}$ will converge to the region
\begin{align}
    \Gamma_i= \left\{s_{k-m,i}\big | |s_{k-m,i}|<  \gamma_i \right\}
\end{align}
where 
\begin{align} \label{thm:region}
    \gamma_i&=\frac{(r+2)E_{k,i}^2+\sum_{j=0}^{r}(b_{j,i}(\ddot{q}_{k,i})T)^2}{\alpha_{m}-\alpha_{m+1}-(r+2)(b_{m,i}(\ddot{q}_{k,i})T)^2}
\\
    |s_{k-m,i}|&=\max \left\{ |s_{k-r,i}|,~|s_{k-r+1,i}|,~\cdots,~|s_{k,i}|\right\}
\end{align}
and $E_{k,i}$ is the elements of
$
     \bm{E_k}=\bm{\bar{M}}^{-1}\bm{\check{H}}(\bm{q_k},\bm{\dot{q}_k},\bm{\ddot{q}_k}).
$
\end{mythm}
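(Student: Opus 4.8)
The plan is to close the loop, reduce the sliding dynamics to a scalar perturbed recursion driven only by the TDE error $E_{k,i}$, and then certify ultimate boundedness with a weighted Lyapunov--Krasovskii functional whose weights are the constants $\alpha_n$, extended by the conventions $\alpha_0:=1$ and $\alpha_{r+1}:=0$. These conventions are what make condition~(\ref{thm:conditions of b}) meaningful at the endpoints $n=0$ and $n=r$, and they let the differences $\alpha_n-\alpha_{n+1}$ telescope to $1$.

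First I would equate the plant~(\ref{model:discrete-time}) with the control law~(\ref{controller: overall}). Using $\bm{\hat H}-\bm{H}=-\bm{\check H}$ and $\bm{E_k}=\bm{\bar M}^{-1}\bm{\check H}$, the nominal equivalent-control and feedforward terms cancel, and the residual dynamics collapse component-wise to
\begin{align}
s_{k+1,i}=-\sum_{j=0}^{r}b_{j,i}(\ddot q_{k,i})\,T\,\mathrm{sig}^{\eta}s_{k-j,i}-E_{k,i},
\end{align}
i.e. precisely the reaching law~(\ref{eq:reaching law}) perturbed by the scalar estimation error $E_{k,i}$. Verifying that the $\bm{a_1}$, $\bm{a_2}$ and $\mathrm{sig}^{\bm{\beta_k}}$ pieces built into~(\ref{controller: overall}) are exactly those that convert $s_{k+1,i}$ into the designed reaching law, keeping track of every $1/T$ and sign, and reconciling the $\mathrm{sig}^{\eta}$ form in~(\ref{eq:reaching law}) with the linear $\bm{s}$ terms written in~(\ref{controller: overall}), is the first place demanding care.

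Next I would take the functional $V_k=\sum_{j=0}^{r}\alpha_j\,s_{k-j,i}^2$, which is positive definite since every $\alpha_j>0$, and shift-and-telescope:
\begin{align}
V_{k+1}-V_k=s_{k+1,i}^2-\sum_{j=0}^{r}(\alpha_j-\alpha_{j+1})\,s_{k-j,i}^2,
\end{align}
where the hypothesis $1>\alpha_1>\cdots>\alpha_r>0$ is exactly what makes each weight $\alpha_j-\alpha_{j+1}$ strictly positive. To bound the lead term I would write $s_{k+1,i}$ as a sum of $r+2$ contributions (the $r+1$ delayed terms plus $E_{k,i}$), apply the power-mean inequality $\big(\sum_{l=1}^{N}x_l\big)^2\le N\sum_{l=1}^{N}x_l^2$ with $N=r+2$, and use the elementary estimate $|\mathrm{sig}^{\eta}s|^2=|s|^{2\eta}\le 1+s^2$ for $0<\eta<1$, obtaining
\begin{align}
s_{k+1,i}^2\le{}&(r+2)\sum_{j=0}^{r}(b_{j,i}T)^2 s_{k-j,i}^2 \nonumber \\
&+(r+2)E_{k,i}^2+\sum_{j=0}^{r}(b_{j,i}T)^2 .
\end{align}
This is the origin of both the $(r+2)$ factor and the constant $\sum_{j}(b_{j,i}T)^2$ in $\gamma_i$. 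Substituting and invoking~(\ref{thm:conditions of b}) renders every coefficient $(r+2)(b_{j,i}T)^2-(\alpha_j-\alpha_{j+1})$ nonpositive, so the sum is dominated by the single index $m$ at which $|s_{k-m,i}|$ is maximal:
\begin{align}
V_{k+1}-V_k\le{}&-\big[(\alpha_m-\alpha_{m+1})-(r+2)(b_{m,i}T)^2\big]s_{k-m,i}^2 \nonumber \\
&+(r+2)E_{k,i}^2+\sum_{j=0}^{r}(b_{j,i}T)^2 .
\end{align}
The bracket is positive exactly under~(\ref{thm:conditions of b}), so $V_{k+1}-V_k<0$ whenever $s_{k-m,i}^2$ exceeds the ratio defining $\gamma_i$, and a standard discrete ultimate-boundedness argument then drives the length-$(r+1)$ window into $\Gamma_i$.

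I expect the principal obstacle to be the closed-loop reduction of the first step rather than the Lyapunov estimates, since once the scalar recursion and the $|s|^{2\eta}\le 1+s^2$ bound are in hand the remaining inequalities are routine. A secondary subtlety is that the maximizing index $m$ changes with $k$, so what the argument certifies is contraction of the \emph{maximum} over the window, not decay of a single fixed quadratic; the conclusion is therefore stated in terms of $\max\{|s_{k-r,i}|,\dots,|s_{k,i}|\}$ precisely to accommodate this time-varying $m$.
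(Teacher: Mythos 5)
Your proposal follows essentially the same route as the paper's own proof: the same closed-loop reduction to the perturbed reaching law, the same weighted Lyapunov--Krasovskii functional $U_{k,i}=\sum_{j=0}^{r}\alpha_j s_{k-j,i}^2$ with the conventions $\alpha_0=1$, $\alpha_{r+1}=0$, the same telescoping of the difference, the same $(r+2)$-term quadratic bound (the paper expands the square and uses $2ab\le a^2+b^2$, which is exactly your power-mean inequality), the same estimate $|s|^{2\eta}\le 1+s^2$, and the same domination by the maximizing index $m$ to conclude $\Delta U_{k,i}<0$ outside $\Gamma_i$. Even your bookkeeping slip---writing $\sum_{j=0}^{r}(b_{j,i}T)^2$ rather than $(r+2)\sum_{j=0}^{r}(b_{j,i}T)^2$ in the constant term, which is what the power-mean step actually yields---mirrors the paper, whose own final bound and stated $\gamma_i$ drop the same factor.
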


\begin{proof}
Substituting (\ref{controller: overall}) into (\ref{model:discrete-time}), we obtain the dynamics of $s_{k+1,i}$ as $
    s_{k+1,i}=-b_{0,i}(\ddot{q}_{k,i})T s_{k,i}^\eta-b_{1,i}(\ddot{q}_{k,i})Ts_{k-1,i}^\eta-\dots 
    -b_{r,i}(\ddot{q}_{k,i})Ts_{k-r,i}^\eta+E_{k,i}.
$

Construct a Lyapunov candidate function $U_{k,i}$ as
\begin{align}
    U_{k,i}=s_{k,i}^2+\alpha_1 s_{k-1,i}^2+\alpha_2 s_{k-2,i}^2+\dots+\alpha_r s_{k-r,i}^2.
\end{align}
Then the difference between $U_{k+1,i}$ and $U_{k,i}$ is
\begin{align}
  \Delta U_{k}=& U_{k+1,i}-U_{k,i}
  \nonumber \\
  =&(\alpha_1-1)s_{k,i}^2+(\alpha_2-\alpha_1)s_{k-1,i}^2+\dots
    \nonumber \\
    &+(\alpha_r-\alpha_{r-1})s_{k-r+1,i}^2+s^2_{k+1,i}
    -\alpha_rs_{k-r,i}^2.
\end{align}

When $s_i>1$, we have $s_i^{2\eta}<s_i^2$.
When $s_i\leq 1$, we have $s_i^{2\eta}\leq 1$.
Thus, it holds that
\begin{align} \label{ineq:s}
    s_{i}^{2\eta}\leq s_i^{2}+1.
\end{align}

Based on the fact that $2ab\leq a^2+b^2,~a,b\in \mathbb{R}$, we have
\begin{align}
    s_{k+1,i}^2=&b_{0,i}^2(\ddot{q}_{k,i})T^2s_{k,i}^{2 \eta}+2b_{0,i}(\ddot{q}_{k,i})b_{1,i}(\ddot{q}_{k,i})T^2s_{k,i}^{\eta}s_{k-1,i}^{\eta}
    \nonumber \\
    &+(b_{1,i}(\ddot{q}_{k,i})T)^2s_{k-1,i}^{2\eta}  \nonumber \\
    &+2b_{1,i}(\ddot{q}_{k,i})b_{2,i}(\ddot{q}_{k,i}) T^2s_{k-1,i}^{\eta}s_{k-2,i}^{\eta}
  \nonumber \\
    &+(b_{2,i}T)^2s_{k-2,i}^{2\eta}+\dots 
    +(b_{r,i}(\ddot{q}_{k,i})T)^2s_{k-r,i}^{2\eta}
      \nonumber \\
    &+(b_{r,i}(\ddot{q}_{k,i})T)s_{k-r,i}^{\eta}E_{i}+E_{i}^2
    \nonumber \\
    \leq& (r+2)[(b_{0,i}(\ddot{q}_{k,i})T)^2s_{k,i}^{2\eta}+(b_{1,i}(\ddot{q}_{k,i})T)^2s_{k-1,i}^{2\eta}
    \nonumber 
    \\&+\dots+(b_{r,i}(\ddot{q}_{k,i})T)^2s_{k-r,i}^{2\eta}+E_{i}^2].
\end{align}    

According to~(\ref{ineq:s}) further, we have
\begin{align}
    s_{k+1,i}^2
    <& (r+2)\left((b_{0,i}(\ddot{q}_{k,i})T)^2s_{k,i}^{2}+(b_{1,i}(\ddot{q}_{k,i})T)^2s_{k-1,i}^{2} \right. \nonumber 
    \\&\left. +\dots+(b_{r,i}(\ddot{q}_{k,i})T)^2s_{k-r,i}^{2}\right)
    \nonumber \\
    &+(r+2)\left(E_{i}^2+\sum_{j=0}^{r}(b_{j,i}(\ddot{q}_{k,i})T)^2\right).
\end{align}



If we define $\alpha_0=1$ and $\alpha_{r+1}=0$, then there is
\begin{align}
    \Delta U_{k,i}=&\left(\alpha_1-\alpha_0+(r+2)(b_{0,i}(\ddot{q}_{k,i})T)^2\right)s_{k,i}^2 \nonumber \\
    &+\left( \alpha_2-\alpha_1 +(r+2)(b_{1,i}(\ddot{q}_{k,i})T)^2\right)s_{k-1,i}^2+\dots 
    \nonumber \\
    &+\left( \alpha_r-\alpha_{r-1}+(r+2)(b_{r-1,i}(\ddot{q}_{k,i})T)^2\right)s_{k-r+1,i}^2 
    \nonumber \\
    &+(\alpha_{r+1}-\alpha_r+(r+2)(b_{r,i}(\ddot{q}_{k,i})T)^2)s_{k-r,i}^2
    \nonumber \\
    &+(r+2)E^2.
\end{align}

From~(\ref{thm:conditions of b}), we obtain that 
\begin{align}
    \alpha_{n+1}-\alpha_{n}+(r+2)(b_{n,i}(\ddot{q}_{k,i})T)^2s_{k,i}^2 \leq 0.
\end{align}

Assume that $s_{k-m, i}=\max \{s_{k,i},s_{k-1,i},\dots, s_{k-r,i}\}$, then we have
\begin{align}
    \Delta U_{k,i}
    &\leq \left(\alpha_{m+1,i}-\alpha_{m,i}
    +(r+2)(b_{m,i}(\ddot{q}_{k,i})T)^2\right)s_{k-m,i}^2
    \nonumber \\
    &
    +(r+2)E^2+\sum_{j=0}^{r}(b_{j,i}(\ddot{q}_{k,i})T)^2.
\end{align}

When $|s_{k-m}|$ is not in the region $\Gamma_i$, i.e., there is 
\begin{align}
    s_{k-m,i}^2
    &>\frac{(r+2)E^2+\sum_{j=0}^{r}(b_{j,i}(\ddot{q}_{k,i})T)^2}{\alpha_{m}-\alpha_{m+1}-(r+2)(b_{m,i}(\ddot{q}_{k,i})T)^2}
\end{align}
it is obtained that
\begin{align}
    \Delta U_{k,i}<0.
\end{align}
This completes the proof of Theorem~\ref{thm:theorem 1}.
 \end{proof}

\begin{myrem}
From~(\ref{thm:conditions of b}), we can infer that when the sampling interval $T$ is smaller, $b_{n,i}(\ddot{q}_{k,i})$ has a larger range of values. In other words, the stability of the system determines a maximum value of the sampling interval. 
\end{myrem}

\begin{myrem}
Equation~(\ref{thm:region}) gives some clues on the source of errors. Firstly, a larger $T$ results in a larger region~$\Gamma_i$. Secondly, a larger $E_{k,i}$ also causes a larger convergence region of $s_{k-m,i}$, and worse tracking precision further. As $E_{k,i}$ is related to the estimation error of TDE, we can get that a precise TDE is the precondition of the precise tracking. Traditional methods such as~\cite{lee2017adaptive} and~\cite{wang2016practical}, only rely on TDE to compensate for the nonlinearity. They have a larger estimation error than our method under the situations that the nonlinearity is large. Thus our method has better precision theoretically. 
\end{myrem}

\section{SIMULATIONS AND EXPERIMENTS} \label{sec:implementation and simulation}

\begin{figure}[http]
  \centering
   \includegraphics[width=220pt]{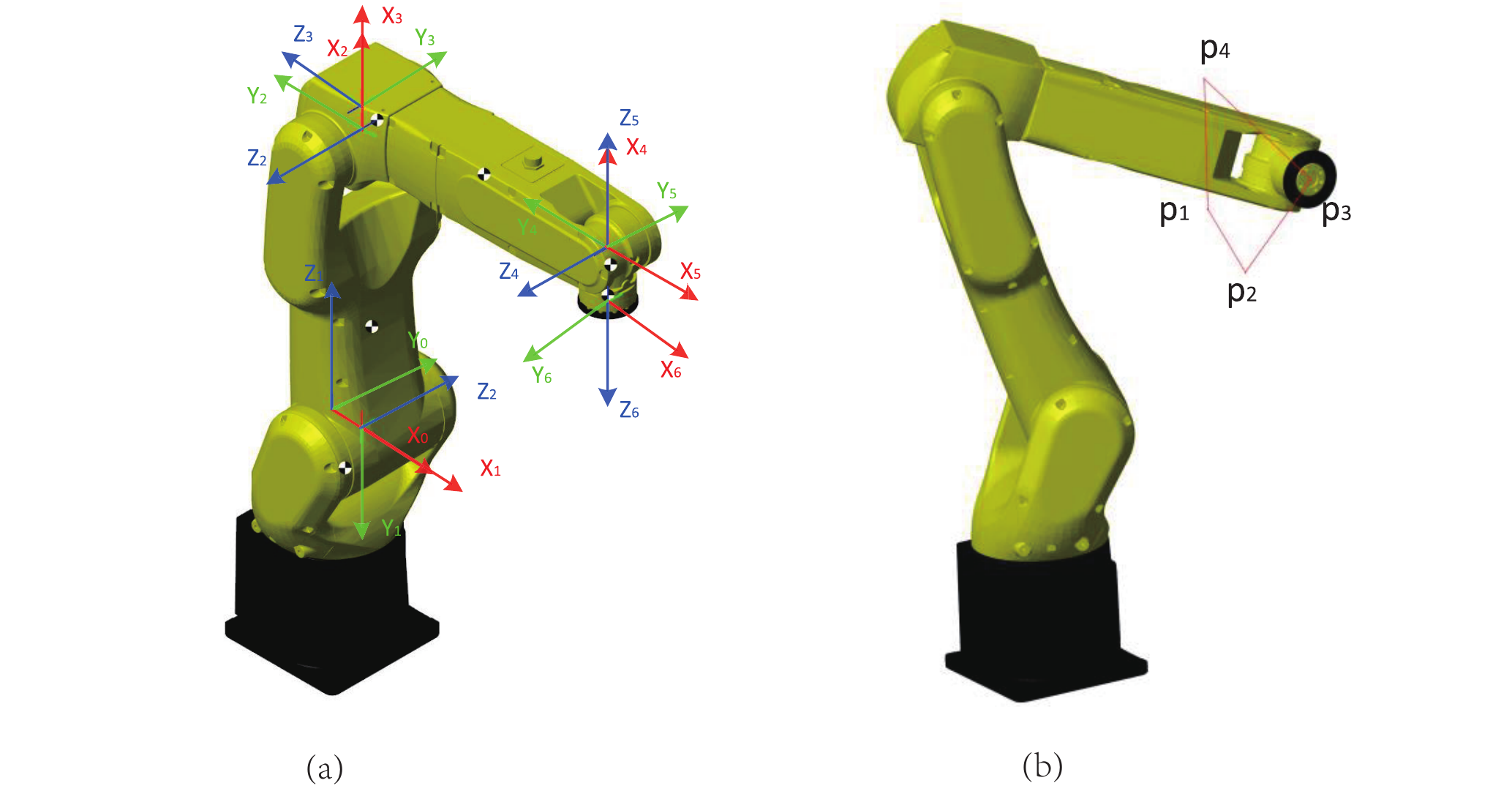}
   \caption{The simulation setup of the FANUC LR Mate 200iD industrial robot, including (a) the frames and (b) the reference trajectory.}
   \label{fig:robot}
   \end{figure}

      \begin{figure}[http]
  \centering
   \includegraphics[width=220pt]{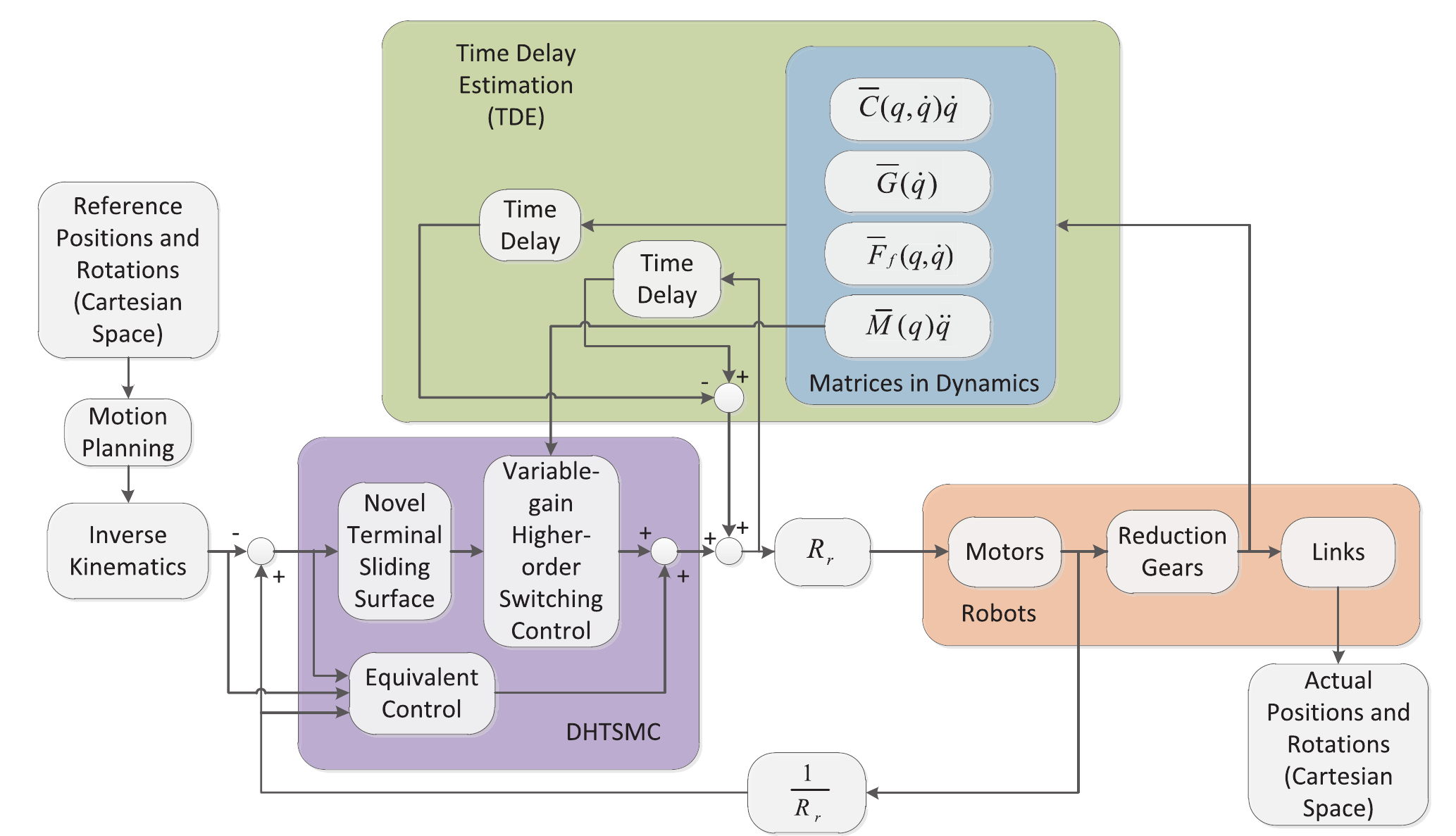}
   \caption{Structure of the closed-loop system where DHTSMC is applied to robots~($R_r$ stands for the reduction ratios of gears). }
   \label{fig:structure}
   \end{figure}
   


\begin{table}[tp]  
    \centering
    \caption{D-H Parameters of 6-DOF FANUC LR Mate 200iD Robot}
    \label{tab:DH parameters of LR mate}
  \begin{tabular}{ccccc}
    \cline{1-5}
    $i$ & $\alpha_{i-1}$~(rad) & $a_{i-1}$~(mm) & $d_i$~(mm) & $\theta_i$~(rad) \\
    \cline{1-5}
    1 & -$\frac{\pi}{2}$ & 50 & 0 & $q_1$\\
    
    2 & $\pi$ & 440 & 0 & $q_2-\frac{\pi}{2}$\\
    
    3 & -$\frac{\pi}{2}$  & 35 & 0 & $q_3$\\
    
    4 & $\frac{\pi}{2}$ & 0 & -420 & $q_4$\\
    
    5 & -$\frac{\pi}{2}$ & 0 & 0 & $q_5$\\
    
    6 & $\pi$ & 0 & -80 & $q_6$\\
    
\cline{1-5}
      \end{tabular}
    \end{table}
    \linespread{1}

\begin{table*}[tp]  
    \centering
    \caption{Physical Parameters of 6-DOF FANUC LR Mate 200iD Robot}
    \label{tab:physical parameters of LR mate}
  \begin{tabular}{cccccccc}
    \cline{1-8}
    \multirow{2}*{NO. } & \multirow{2}*{Gear Ratio} &  Link Mass  & Motor Inertia  & Link Coulomb   & Link Viscous   & Motor Coulomb  & Motor Viscous   \\
    
    &&(kg)&(kg m$^2$)& Friction (N$\cdot$m)& Coefficient  (N$\cdot$s/m)& Friction (N$\cdot$m)& Coefficient (N$\cdot$s/m) \\
    \cline{1-8}
    1 & 114.6 & 2.4 & 8.9 $\times$10$^{-5}$ & 0.045 &3.1 & 0.052&0.23$\times$10$^{-3}$\\
    
    2 & 121 & 7.8 & 6.0 $\times$10$^{-5}$ & 0.095 &4.1&0.052&0.28$\times$10$^{-3}$\\
    
    3 & 102.1  & 3.0 & 5.2 $\times$10$^{-5}$ & 0.11 &1.2&0.041&0.11$\times$10$^{-3}$\\
    
    4 & 73.0 & 4.1 & 7.2 $\times$10$^{-5}$ & 0.094 &0.81&0.044&0.15$\times$10$^{-3}$\\
    
    5 & 83.3 & 1.7 & 1.4 $\times$10$^{-5}$ & 0.10 &0.37&0.012&0.053$\times$10$^{-3}$\\
    
    6 & 41.4 & 0.17 & 1.7 $\times$10$^{-5}$ & 0.15 &0.18&0.028&0.11$\times$10$^{-3}$\\
    
\cline{1-8}
      \end{tabular}
    \end{table*}
    \linespread{1}

\subsection{Simulation}

The simulation environment is developed based on MATLAB R2019b and Simulink. With the help of the tool of Mechanics Explorer, a complete model of FANUC LR Mate 200iD industrial robot~(shown in Fig.~\ref{fig:robot}) is established. Coordinates of the manipulator are set as shown in Fig.~\ref{fig:robot} (a), and corresponding D-H parameters are displayed in Table~\ref{tab:DH parameters of LR mate}. Other related physical parameters of the simulation model are given in Table~\ref{tab:physical parameters of LR mate}.

The control structure of the proposed method is shown in Fig.~\ref{fig:structure}. First of all, the reference positions and rotations in Cartesian space are generated. The initial position is $p_1(0.47,0,0.395)$, and the initial rotation angle is $\psi_1(0,0,\pi)$ in the order of $Z_1Y_2X_3$. Then it moves to $p_2(0.2,0.2,0.2)$ with rotation angle $\psi_2(\pi/4,\pi/4,\pi/2)$. At point $p_3(0.2,0.3,0.3)$ with rotation angle $\psi_3(\pi/2,\pi/2,\pi/2)$, after that, the manipulator returns to the initial position and rotation via $p_4(0.3,0.1,0.5)$ with rotation $\psi_4(\pi/4,\pi/4,\pi/2)$, as shown in Fig.~\ref{fig:robot} (b).

Then the motion planning rules are conducted to smooth the trajectories and realize some additional functions on robots. The simple S-curve motion planning rule is considered in the simulation to make the acceleration and deceleration of the robots are performed gently and steadily. 
After that, we input the planning results to the inverse kinematics, which is derived based on the frames and Denavit–Hartenberg~(D-H) parameters of the robot manipulator, to obtain the reference $\bm{q}$. Then the proposed controller is applied in the joint space. 

\begin{figure}[http]
  \centering
   \includegraphics[width=250pt]{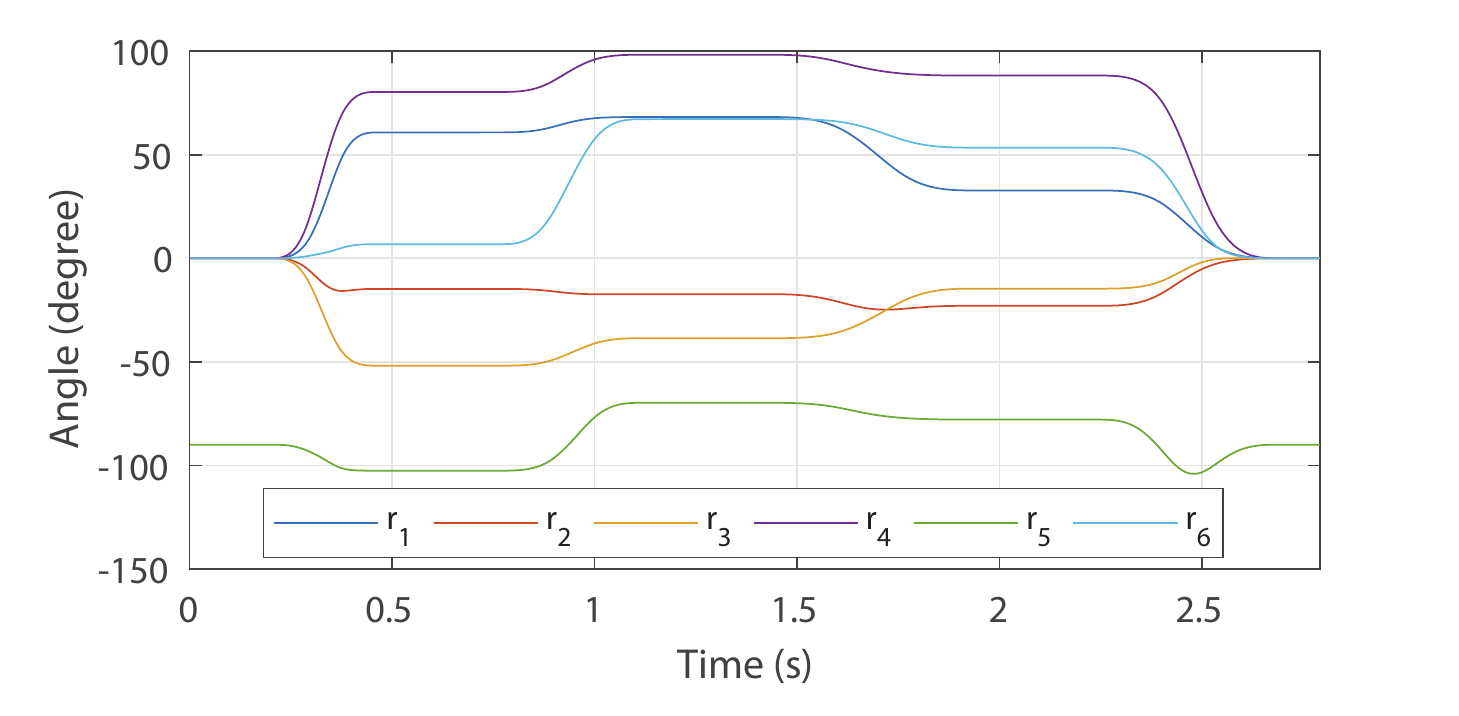}
   \caption{The reference angle of each joint in the simulation.}
   \label{fig: reference_simulation}
   \end{figure}

\begin{figure}[http]
  \centering
   \includegraphics[width=250pt]{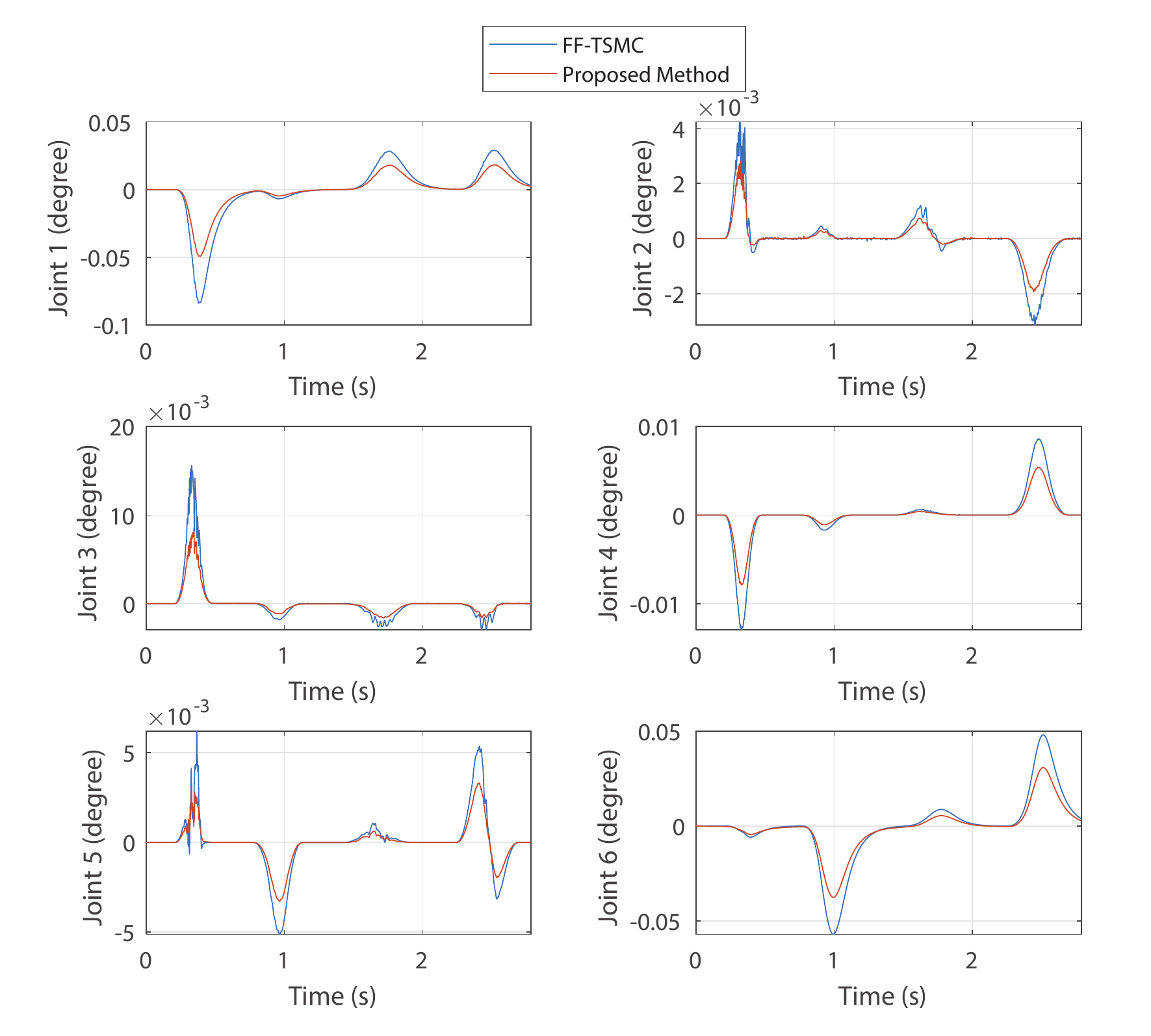}
   \caption{The tracking error in the joint space in the simulation.}
   \label{fig: sim_joint_error}
   \end{figure}

We set the maximum acceleration as $50~m/s^2$ from $p_1$ to $p_2$ and as $10~m/s^2$ from $p_2$ to $p_1$. Before every motion, the manipulator will wait for $0.1~s$, and the sampling time is set as the same as the controller, $1~ms$. After that, we calculate the inverse kinematics based on the frames shown in Fig.~\ref{fig:robot} (a) and D-H parameters shown in Table.~\ref{tab:DH parameters of LR mate}. After the planning and calculation of inverse kinematics, the trajectory of each joint is shown in Fig.~\ref{fig: reference_simulation}. We introduce the band-limited white noise as an additional disturbance, whose sampling time is set as $0.1$~s and other parameters remain default. For the controller, parameters are tuned as
 $a_1=diag[10, 100, 100, 15, 100, 10]$, $a_2=0.01 \bm{I}$, $b_0=4.5\times 10^5+0.005 \ddot{q}_i$, $b_1=2.25 \times 10^5$, $r=1$ and $\bm{I}$ is the unit matrix.

Apart from the proposed DHTSMC method, we also implement the traditional feedforward control scheme with TSMC (noted as FF-TSMC) with TDE proposed in~\cite{wang2020model} as a comparison. The results of the simulations are illustrated in Fig.~\ref{fig: sim_joint_error}. For each joint, there are four peaks or valleys in the trajectory of tracking error, corresponding to the processes moving from one point to the other. When the joint is waiting in a position, tracking errors of both method can converge to a relatively small value. This phenomenon means that when the joint is accelerating or decelerating, the joint tends to have a more massive tracking error. However, with the proposed method, the tracking error is significantly reduced when the joints change their positions, which is about two-thirds of the traditional FF-TSMC. This shows that compared with the feedforward structure, the proposed method can make better use of the dynamics, to get better performance.

  \begin{figure}[http]
  \centering
   \includegraphics[width=250pt]{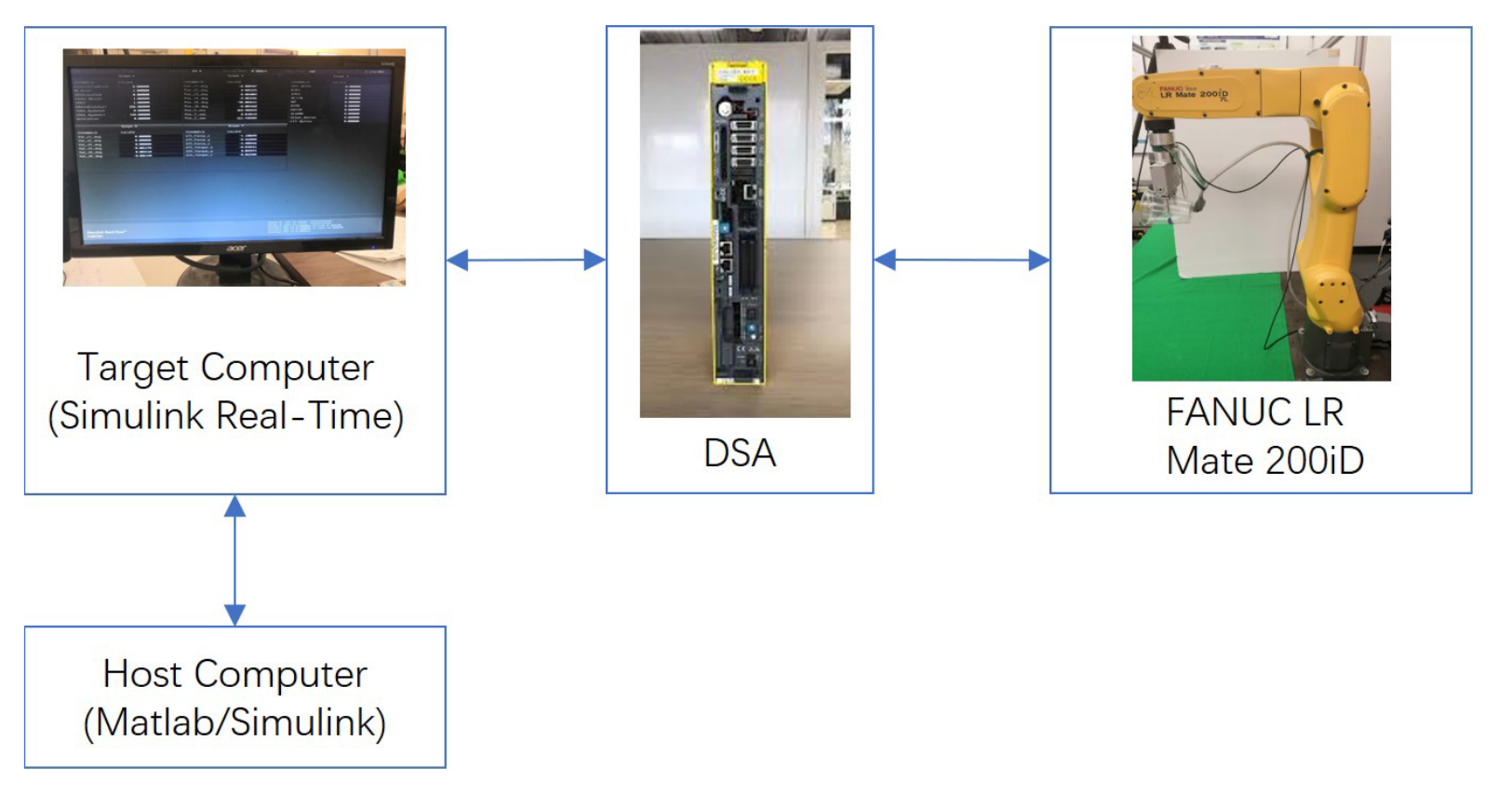}
   \caption{The experimental setup.}
   \label{fig: experimental setup}
   \end{figure}
  

\begin{figure}[http]
  \centering
  \includegraphics[width=250pt]{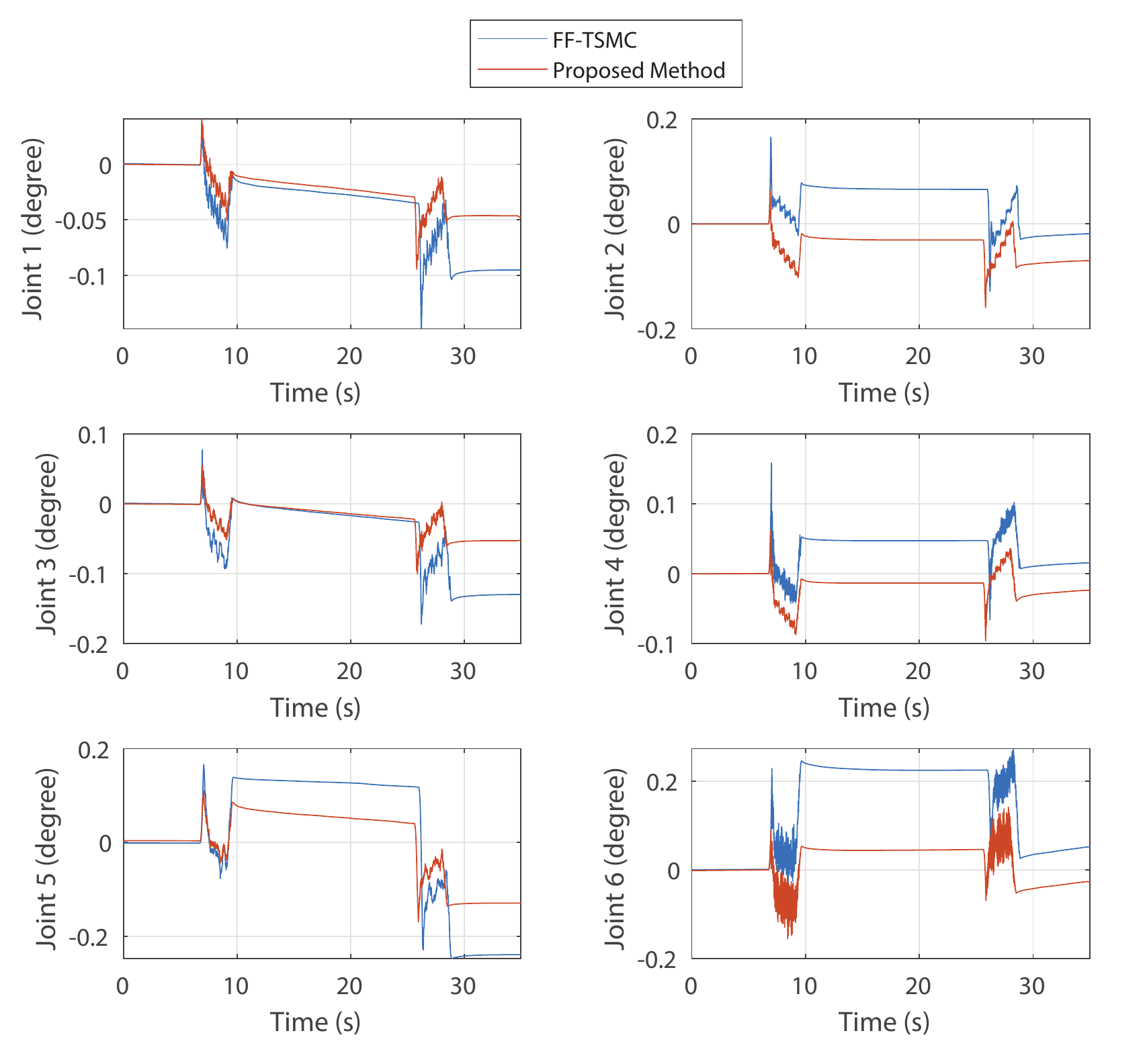}
  \caption{The tracking error in the joint space in the experiments.}
  \label{fig: joint error experiments}
  \end{figure}

  \begin{figure}[http]
  \centering
  \includegraphics[width=250pt]{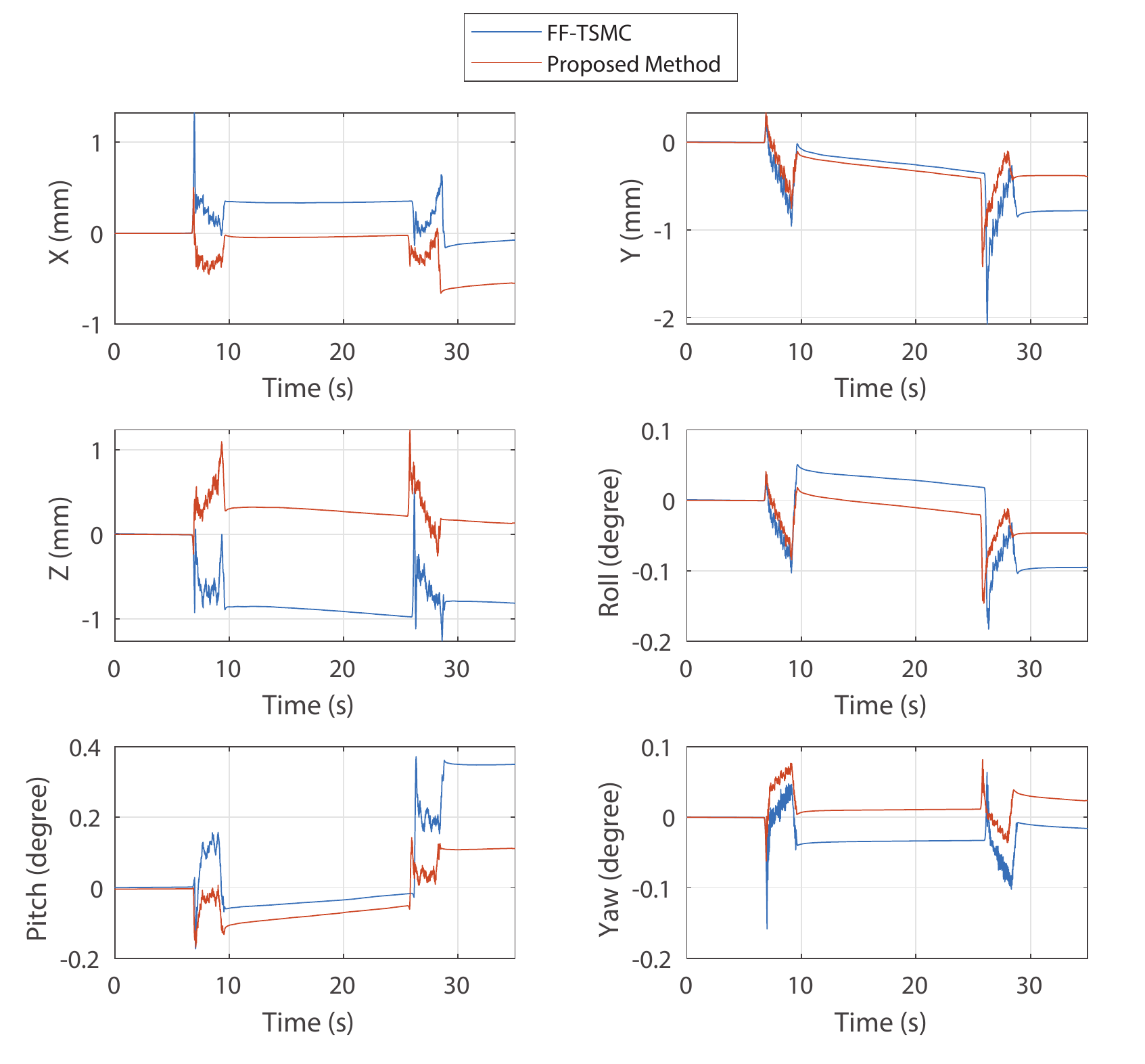}
  \caption{The tracking error in the Cartesian space in the experiments.}
  \label{fig: Cartesian experimental}
  \end{figure}

\subsection{Experiments}

Practical experiments are also conducted to verify the effectiveness of the proposed method further. As shown in Fig.~\ref{fig: experimental setup}, the experimental setup consists of the host computer, the target computer, the digital servo adapter (DSA), and the FANUC LR Mate 200iD robots. The control algorithm is programmed on the host computer and applied to the DSA and robot via Simulink Real-Time running on the target computer. The parameters of the robot are the same as the model in the previous simulation. The sampling interval of the robot is set as 0.25 ms, and that of the controller is set as 1 ms. 

We also compare the FF-TSMC and the proposed control method in the experiments. The reference trajectory of the robots is directly given in joint space. All the joints are set to rotate from $0^\circ$ to $20^\circ$ and then return to $0^\circ$. Parameters of the controller are set as $a_1=diag[1, 20, 13, 2, 15, 3]$, $a_2=0.015 \bm{I}$, $b_0=1\times 10^5+0.002 \ddot{q}_i$, $b_1=0.25 \times 10^5$ and $r=1$.

Fig.~\ref{fig: joint error experiments} shows the tracking error of each joint under the two controllers. We can see that the tracking error of the proposed method is overall smaller in each joint compared with the previous FF-TSMC. We have also noticed that for both methods, there is an offset after the manipulator reaches the target position. We think this is caused by the unmodelled dynamics of the robot. It is easy to find that for most joints, the proposed method has a smaller offset than the FF-TSMC. 

Further, the performance of the robots in the Cartesian space is shown in Fig.~\ref{fig: Cartesian experimental}. The position and pose of the end-effector are
calculated by using the joint position and the forward kinematics. In particular, the pose is described by Euler angles, which rotates in the order of $Z_1Y_2X_3$. From Fig.~\ref{fig: Cartesian experimental}, we can see that the proposed method has a smaller error in both positions and poses. The position error of the proposed method is around 1 mm, and the Euler angular error is below 0.2$^\circ$, which are half of the previous method.

\section{CONCLUSIONS}

A control method of DHTSMC with TDE was proposed and implemented on the manipulators in this paper. The controller was derived with the nonlinear dynamics of robots taken into the feedback loop directly, so that the nonlinearities, the uncertainties, and disturbances were adequately handled. Specifically, we improved the structure of the sliding surface to achieve a faster response, and we gave a universal control method of different order sliding mode controllers. By using the variable gains, the conflict between the fast response and small chattering amplitude was further settled. We also analyzed the stability of the closed-loop system. Results of simulations and experiments showed that the proposed controller was much more effective than conventional DTSMC methods in both joint space and Cartesian space. 






\section*{ACKNOWLEDGMENT}
The authors would like to thank Dr.~Yu Zhao for his contribution to the MATLAB simulator of the robot manipulator used in this work.



\bibliographystyle{IEEEtran}
\balance
\bibliography{IEEEabrv}

\begin{thebibliography}{10}
\providecommand{\url}[1]{#1}
\csname url@rmstyle\endcsname
\providecommand{\newblock}{\relax}
\providecommand{\bibinfo}[2]{#2}
\providecommand\BIBentrySTDinterwordspacing{\spaceskip=0pt\relax}
\providecommand\BIBentryALTinterwordstretchfactor{4}
\providecommand\BIBentryALTinterwordspacing{\spaceskip=\fontdimen2\font plus
\BIBentryALTinterwordstretchfactor\fontdimen3\font minus
  \fontdimen4\font\relax}
\providecommand\BIBforeignlanguage[2]{{%
\expandafter\ifx\csname l@#1\endcsname\relax
\typeout{** WARNING: IEEEtran.bst: No hyphenation pattern has been}%
\typeout{** loaded for the language `#1'. Using the pattern for}%
\typeout{** the default language instead.}%
\else
\language=\csname l@#1\endcsname
\fi
#2}}

\bibitem{wang2020model}
Y.~Wang, K.~Zhu, B.~Chen, and M.~Jin, ``Model-free continuous nonsingular fast
  terminal sliding mode control for cable-driven manipulators,'' \emph{ISA
  Trans.}, vol.~98, pp. 483--495, 2020.

\bibitem{abdelhedi2014nonlinear}
F.~Abdelhedi, Y.~Bouteraa, A.~Chemori, and N.~Derbel, ``Nonlinear pid and
  feedforward control of robotic manipulators,'' in \emph{2014 15th Int. Conf.
  on Sciences and Techniques of Automatic Control and Computer Engineering
  (STA)}.\hskip 1em plus 0.5em minus 0.4em\relax IEEE, 2014, pp. 349--354.

\bibitem{yang1999sliding}
J.-M. Yang and J.-H. Kim, ``Sliding mode control for trajectory tracking of
  nonholonomic wheeled mobile robots,'' \emph{IEEE Trans. Robot. Autom.},
  vol.~15, no.~3, pp. 578--587, 1999.

\bibitem{alakshendra2016robust}
V.~Alakshendra and S.~S. Chiddarwar, ``A robust adaptive control of mecanum
  wheel mobile robot: simulation and experimental validation,'' in \emph{2016
  IEEE/RSJ Int. Conf. on Intelligent Robots and Syst. (IROS)}.\hskip 1em plus
  0.5em minus 0.4em\relax IEEE, 2016, pp. 5606--5611.

\bibitem{park2000sliding}
J.~H. Park and H.~C. Cho, ``Sliding mode control of bilateral teleoperation
  systems with force-reflection on the internet,'' in \emph{Proceedings. 2000
  IEEE/RSJ International Conference on Intelligent Robots and Systems (IROS
  2000)(Cat. No. 00CH37113)}, vol.~2.\hskip 1em plus 0.5em minus 0.4em\relax
  IEEE, 2000, pp. 1187--1192.

\bibitem{kuang2020precise}
Z.~Kuang, H.~Gao, and M.~Tomizuka, ``Precise linear-motor synchronization
  control via cross-coupled second-order discrete-time fractional-order sliding
  mode,'' \emph{IEEE/ASME Trans. Mechatronics}, 2020.

\bibitem{ma2020fuzzy}
Z.~Ma, P.~Huang, and Z.~Kuang, ``Fuzzy approximate learning-based sliding mode
  control for deploying tethered space robot,'' \emph{IEEE Trans. Fuzzy Syst.},
  2020.

\bibitem{kuang2020fractional}
Z.~Kuang, L.~Sun, H.~Gao, and M.~Tomizuka, ``Fractional-order variable-gain
  super-twisting control with application to wafer stages of photolithography
  systems,'' in \emph{2020 Int. Symp. on Flexible Automation}.\hskip 1em plus
  0.5em minus 0.4em\relax American Society of Mechanical Engineers Digital
  Collection, 2020.

\bibitem{feng2002non}
Y.~Feng, X.~Yu, and Z.~Man, ``Non-singular terminal sliding mode control of
  rigid manipulators,'' \emph{Automatica}, vol.~38, no.~12, pp. 2159--2167,
  2002.

\bibitem{kuang2018simplified}
Z.~Kuang, G.~Sun, and H.~Gao, ``Simplified newton-based cee and discrete-time
  fractional-order sliding-mode cec,'' \emph{IEEE/ASME Trans. Mechatronics},
  vol.~24, no.~1, pp. 175--185, 2018.

\bibitem{li2013discrete}
S.~Li, H.~Du, and X.~Yu, ``Discrete-time terminal sliding mode control systems
  based on euler's discretization,'' \emph{IEEE Trans. Autom. Control},
  vol.~59, no.~2, pp. 546--552, 2013.

\bibitem{abidi2008discrete}
K.~Abidi, J.-X. Xu, and J.-H. She, ``A discrete-time terminal sliding-mode
  control approach applied to a motion control problem,'' \emph{IEEE Trans.
  Ind. Electron.}, vol.~56, no.~9, pp. 3619--3627, 2008.

\bibitem{kuang2018contouring}
Z.~Kuang, Z.~Yu, G.~Sun, W.~Lin, and H.~Gao, ``Contouring control of linear
  motor direct-drive xy table via chattering-free discrete-time sliding mode
  control,'' in \emph{2018 IEEE 27th International Symposium on Industrial
  Electronics (ISIE)}.\hskip 1em plus 0.5em minus 0.4em\relax IEEE, 2018, pp.
  201--205.

\bibitem{tripathi2019fast}
V.~K. Tripathi, A.~K. Kamath, N.~K. Verma, and L.~Behera, ``Fast terminal
  sliding mode super twisting controller for position and altitude tracking of
  the quadrotor,'' in \emph{2019 International Conf. on Robotics and Automation
  (ICRA)}.\hskip 1em plus 0.5em minus 0.4em\relax IEEE, 2019, pp. 6468--6474.

\bibitem{kim2019discrete}
J.~Kim, M.~Jin, W.~Choi, and J.~Lee, ``Discrete time delay control for
  hydraulic excavator motion control with terminal sliding mode control,''
  \emph{Mechatronics}, vol.~60, pp. 15--25, 2019.

\bibitem{rath2016output}
J.~J. Rath, M.~Defoort, H.~R. Karimi, and K.~C. Veluvolu, ``Output feedback
  active suspension control with higher order terminal sliding mode,''
  \emph{IEEE Trans. Ind. Electron.}, vol.~64, no.~2, pp. 1392--1403, 2016.

\bibitem{ahmed2019adaptive}
S.~Ahmed, H.~Wang, and Y.~Tian, ``Adaptive high-order terminal sliding mode
  control based on time delay estimation for the robotic manipulators with
  backlash hysteresis,'' \emph{IEEE Trans. Syst., Man, Cybern., Syst.}, 2019.

\bibitem{xu2015piezoelectric}
Q.~Xu, ``Piezoelectric nanopositioning control using second-order discrete-time
  terminal sliding-mode strategy,'' \emph{IEEE Trans. Ind. Electron.}, vol.~62,
  no.~12, pp. 7738--7748, 2015.

\bibitem{iwamura2013motion}
T.~Iwamura and M.~Toda, ``Motion control of an oscillatory-base manipulator
  using sliding mode control via rotating sliding surface with variable-gain
  integral control,'' in \emph{2013 American Control Conf.}\hskip 1em plus
  0.5em minus 0.4em\relax IEEE, 2013, pp. 5742--5747.

\bibitem{lee2017adaptive}
J.~Lee, P.~H. Chang, and M.~Jin, ``Adaptive integral sliding mode control with
  time-delay estimation for robot manipulators,'' \emph{IEEE Trans. Ind.
  Electron.}, vol.~64, no.~8, pp. 6796--6804, 2017.

\bibitem{della2019unified}
B.~Della~Corte, H.~Andreasson, T.~Stoyanov, and G.~Grisetti, ``Unified
  motion-based calibration of mobile multi-sensor platforms with time delay
  estimation,'' \emph{IEEE Robot. Autom. Lett.}, vol.~4, no.~2, pp. 902--909,
  2019.

\bibitem{van2016finite}
M.~Van, S.~S. Ge, and H.~Ren, ``Finite time fault tolerant control for robot
  manipulators using time delay estimation and continuous nonsingular fast
  terminal sliding mode control,'' \emph{IEEE Trans. Cybern.}, vol.~47, no.~7,
  pp. 1681--1693, 2016.

\bibitem{kali2018super}
Y.~Kali, M.~Saad, K.~Benjelloun, and C.~Khairallah, ``Super-twisting algorithm
  with time delay estimation for uncertain robot manipulators,''
  \emph{Nonlinear Dyn.}, vol.~93, no.~2, pp. 557--569, 2018.

\bibitem{wang2016practical}
Y.~Wang, L.~Gu, Y.~Xu, and X.~Cao, ``Practical tracking control of robot
  manipulators with continuous fractional-order nonsingular terminal sliding
  mode,'' \emph{IEEE Trans. Ind. Electron.}, vol.~63, no.~10, pp. 6194--6204,
  2016.

\end{thebibliography}

\vfill

\end{document}